\pgfplotsset{compat=1.18}
\newcommand{\spara}[1]{\smallskip\noindent\textbf{#1}}
\theoremstyle{plain}
\newtheorem{theorem}{Theorem}
\crefname{theorem}{Thm.}{Thms.}
\crefname{proposition}{Prop.}{Props.}
\crefname{lemma}{lem.}{lems.}
\crefname{corollary}{Cor.}{Cors.}
\crefname{definition}{Def.}{Defs.}
\crefname{section}{Sec.}{Secs.}
\crefname{figure}{Fig.}{Figs.}
\crefname{problem}{Prob.}{Probs.}
\crefname{appendix}{App.}{Apps.}
\crefname{equation}{Eq.}{Eqs.}
\crefname{table}{Tab.}{Tabs.}
\newcommand{\reall}{\ensuremath{\mathbb{R}}\xspace}
\definecolor{mypurple}{RGB}{254, 68, 218}
\def\BibTeX{{\rm B\kern-.05em{\sc i\kern-.025em b}\kern-.08em
    T\kern-.1667em\lower.7ex\hbox{E}\kern-.125emX}}
\title{Colored Markov Random Fields for
Probabilistic \\ Topological Modeling\\
\thanks{The work was supported by the SNS JU project 6G-GOALS   
under the EU’s Horizon program Grant Agreement No 101139232, by Huawei Technology France SASU under Grant N. Tg20250616041, and by the  European Union under the Italian National Recovery and Resilience Plan of NextGenerationEU, partnership on Telecommunications of the Future (PE00000001- program RESTART).}
}
\author{
\IEEEauthorblockN{
Lorenzo Marinucci\IEEEauthorrefmark{1},
Leonardo Di Nino\IEEEauthorrefmark{2},
Gabriele D'Acunto\IEEEauthorrefmark{2},\\
Mario Edoardo Pandolfo\IEEEauthorrefmark{3},
Paolo Di Lorenzo\IEEEauthorrefmark{2},
Sergio Barbarossa\IEEEauthorrefmark{2}
}

\IEEEauthorblockA{
\IEEEauthorrefmark{1}Statistical Sciences Dept., Sapienza University of Rome, Italy \;\\
\IEEEauthorrefmark{2}DIET  Dept., Sapienza University of Rome, Italy \;\\
\IEEEauthorrefmark{3}DIAG Dept., Sapienza University of Rome, Italy
}

\IEEEauthorblockA{
\{l.marinucci,
leonardo.dinino,
gabriele.dacunto,
marioedoardo.pandolfo,
paolo.dilorenzo,
sergio.barbarossa\}@uniroma1.it
}
}
\begin{document}
\maketitle

\begin{abstract}
Probabilistic Graphical Models (PGMs) encode conditional dependencies among random variables using a graph--nodes for variables, links for dependencies--and factorize the joint distribution into lower-dimensional components.
This makes PGMs well-suited for analyzing complex systems and supporting decision-making. Recent advances in topological signal processing highlight the importance of variables defined on topological spaces in several application domains. 
In such cases, the underlying topology shapes statistical relationships, limiting the expressiveness of canonical PGMs. 
To overcome this limitation, we introduce Colored Markov Random Fields (CMRFs), which model both conditional and marginal dependencies among Gaussian edge variables on topological spaces, with a theoretical foundation in Hodge theory. CMRFs extend classical Gaussian Markov Random Fields by 
including link coloring: connectivity encodes conditional independence, while color encodes marginal independence.
We quantify the benefits of CMRFs through a distributed estimation case study over a physical network, comparing it with baselines with different levels of topological prior.
\end{abstract}

\section{Introduction}
Probabilistic Graphical Models (PGMs) provide a unified framework for representation,
inference, and learning in complex systems with structured randomness, described through
dependency relations among random variables \cite{scutari,koller2009probabilistic}. In
PGMs, each variable is associated with a node of a graph, and links indicate where
\emph{statistical interactions} between variables occur. A well-studied instance is the
Markov Random Field (MRF), which uses an undirected graph to encode conditional
independence statements within a collection of variables \cite{li2009markov}. In
particular, a Gaussian MRF specializes this model to multivariate Gaussian
distributions, with graph connectivity captured by the sparsity pattern of the
precision matrix \cite{rue2005gaussian}.

In many application domains, however, the random variables of interest do not arise as abstract
labels on an unstructured index set, but are inherently tied to an underlying physical or relational network. Furthermore, variables are often not confined to the nodes of the network, but can also be associated with groups of two or more elements, composing so-called \textit{higher-order signals} \cite{giusti2016twos,lambiotte2019higherorder}. Examples include flows in water or power distribution infrastructures, traffic load on communication networks or multi-way interactions between individuals in social networks \cite{cattai2025leak,kazim2025edge, frantzen2025hlsad}.
In this setting, Topological Signal Processing (TSP) offers a principled framework for
analyzing signals of varying order \cite{barbarossa2020topological,schaub2021signal}. TSP represents the underlying physical domain by discrete topological spaces such as simplicial or cell
complexes, which extend graphs by adding triangles and higher-order components. In addition, over simplicial complexes, TSP relates signals across different orders by means of discrete Hodge theory, capturing how interactions between vertex-, edge-, and triangle-level signals are structured.

While most contributions in TSP focus on deterministic signals, recent works have proposed Gaussian models for signals supported on discrete topological spaces \cite{sardellitti2022probabilistic, yang2024hodgeedgegp,alain2024gaussian,gurugubelli2024simplicial,marinucci2025simplicialgaussianmodelsrepresentation}. In detail, \cite{sardellitti2022probabilistic} introduces a probabilistic topological model on edge signals, and a lasso algorithm to estimate their conditional dependencies and the domain's topology. Extending the graph-based approach, works in \cite{yang2024hodgeedgegp,alain2024gaussian} propose discrete Matérn kernels to model Gaussian processes over simplicial and cell complexes. Authors in \cite{gurugubelli2024simplicial} formulate a probabilistic model for edge flows on simplicial complexes, together with a topology inference algorithm. Finally, in our previous work \cite{marinucci2025simplicialgaussianmodelsrepresentation}, we build on Hodge theory to introduce the Simplicial Gaussian Model (SGM), a class of random signals supported on all levels of a simplicial complex, deriving a marginal model for edge signals. We also propose a maximum-likelihood method to estimate the parameters of the SGM and the higher-order topology from edge observations.

However, to the best of our knowledge, the probabilistic independence structure of Gaussian edge signals on simplicial complexes is still underexplored. 
Building on \cite{marinucci2025simplicialgaussianmodelsrepresentation}, we introduce a class of graphical models for representing such signals, named Colored Markov Random Fields (CMRFs). 
The CMRF not only encodes 
 \textit{where} statistical interactions between edge variables occur, but also captures interaction \textit{types} by link coloring. This distinction reflects the natural hierarchy characterizing simplicial complexes, on which the edge variables are defined. We prove that our approach preserves the expressiveness of MRFs in terms of conditional independence statements, while also providing a layer of marginal independence relations, induced by the heterogeneity of interactions over the topological domain. Finally, we validate this theoretical framework by addressing a distributed estimation task over the links of a physical network. Compared with different baselines providing topological information, numerical results show that the CMRF consistently yields better estimation performance.
\section{Background}\label{sec:background}

\spara{Simplicial Complexes.} A \textit{simplicial complex} $\mathcal{X}$ is a set $\mathcal{V}$ together with a family of subsets (\textit{simplices}) $\mathcal{S}$ such that: (i) for every $v \in V$, $\{v\}$ belongs to $\mathcal{S}$, (ii) if $\sigma_i \in \mathcal{S}$ and $\sigma_j \subset \sigma_i$, then $\sigma_j \in \mathcal{S}$. The \textit{order} $k$ of a simplex (also referred to as \textit{$k$-simplex}) is given by its cardinality minus one. 
The largest order $K$ of the simplices gives the \textit{dimension} of the complex. For instance, a $2-$dimensional simplicial complex include the set $\mathcal{V}$ of $0-$simplices, the set $\mathcal{E}$ of $1$-simplices and the set $\mathcal{T}$ of $2$-simplices, denoted \textit{vertices, edges} and \textit{triangles}, respectively.

\spara{Incidence matrices.} 
Each simplex in $\mathcal{X}$ can be assigned one of two possible orientations. 
Let $\mathcal{X}$ be a fully-oriented, $K-$dimensional simplicial complex. Its incidence relations are encoded by a set of \textit{incidence matrices} $\{\mathbf{B}_k\}_{k=1}^K$ (see \cite{grady2010discrete}).  
For each $k=1,...,K$, the entries of the matrix $\mathbf{B}_k$ are
\[
\mathbf{B}_k(i,j)=
\begin{cases}
  +1\ , & \text{if $\sigma_j^{k-1} \subset \sigma_i^{k}$ with coherent orientation},\\[2pt]
  -1\ , & \text{if $\sigma_j^{k-1} \subset \sigma_i^{k}$ with opposite orientation},\\[2pt]
   0\ , & \text{otherwise},
\end{cases}
\]
with $\sigma_j^{k-1}$ and $\sigma_i^{k}$ simplices of order $k-1$ and $k$, respectively. For each $k=1,...,K-1$, the incidence matrices satisfy the \textit{chain complex property}, i.e. $\mathbf{B}_k \mathbf{B}_{k+1} = \mathbf{0}$.

\spara{Hodge Laplacians.} Graph Laplacians admit generalization to simplicial complexes. Given the incidence matrices $\{\mathbf{B}_k\}_{k=1}^K$, the \textit{combinatorial Laplacians} are defined as:
\begin{align}
 &\mathbf{L}_0 =\mathbf{B}_1\mathbf{B}^T_1, \notag\\
 &\mathbf{L}_k = \mathbf{B}^T_k\mathbf{B}_k+\mathbf{B}_{k+1}\mathbf{B}^T_{k+1}, \quad k=1,...,K-1 \\
 &\mathbf{L}_K = \mathbf{B}^T_K\mathbf{B}_K .\notag
\end{align}
For each intermediate order $k=1,...,K-1$, they are composed of two terms. The \textit{lower Laplacian} $\mathbf{L}_{k,d} = \mathbf{B}_k^\top \mathbf{B}_k$ describes the \textit{lower-adiacency} relations between $k-$simplices, i.e. if they share a common $k-1$ face. The \textit{upper Laplacian} $\mathbf{L}_{k,u} = \mathbf{B}_{k+1}\mathbf{B}_{k+1}^\top$ describes the \textit{upper-adiacency} relations between $k-$simplices, i.e. if they are face of a common $k+1$ simplex.    

\spara{Topological Signals.} 
From now on, we focus on simplicial complexes of dimension two. A \emph{topological signal} on a $2$-dimensional simplicial complex is a real-valued function defined on its simplices of a fixed order. In particular, we consider $0$-, $1$-, and $2$-signals defined on the sets of vertices $\mathcal{V}$, edges $\mathcal{E}$, and triangles $\mathcal{T}$, respectively:
\[
\mathbf{x}_\mathcal{V}:\mathcal{V}\to\mathbb{R},\qquad
\mathbf{x}_\mathcal{E}:\mathcal{E}\to\mathbb{R},\qquad
\mathbf{x}_\mathcal{T}:\mathcal{T}\to\mathbb{R}.
\]
Assigning an ordering to the simplices, topological signals can be defined as vectors $\mathbf{x}_\mathcal{V} \in \mathbb{R}^{|\mathcal{V}|}$, $\mathbf{x}_\mathcal{E} \in \mathbb{R}^{|\mathcal{E}|}$, and $\mathbf{x}_\mathcal{T} \in \mathbb{R}^{|\mathcal{T}|}$. 

\spara{Hodge Decomposition.} The property $\mathbf{B}_k \mathbf{B}_{k+1} = \mathbf{0}$ induces orthogonal decompositions of the signal spaces, referred to as \textit{Hodge decompositions}. For the edge signal spaces (order $k=1$), this consists of:
\begin{equation}
\mathbb{R}^{|\mathcal{E}|} = 
\mathrm{im}(\mathbf{B}_1^\top)
\oplus
\mathrm{im}(\mathbf{B}_2)
\oplus
\mathrm{ker}(\mathbf{L}_1)\ . \label{eq:Hodge_dec} 
\end{equation}
Consequently, each edge signal $\mathbf{x}_\mathcal{E} \in \mathbb{R}^{|\mathcal{E}|}$ can be written as  
\begin{equation}\label{eq:Hodge_dec_2}
\mathbf{x}_\mathcal{E} = \mathbf{B}_1^\top \mathbf{x}_\mathcal{V} + \mathbf{B}_2 \mathbf{x}_\mathcal{T} + \mathbf{h}_\mathcal{E}\ ,
\end{equation}
with $\mathbf{x}_\mathcal{V} \in \mathbb{R}^{|\mathcal{V}|}$, $\mathbf{x}_\mathcal{T} \in \mathbb{R}^{|\mathcal{T}|}$ and $\mathbf{h}_\mathcal{E} \in \mathbb{R}^{|\mathcal{E}|}$. By analogy with Hodge theory on Riemannian manifolds, the three terms in~\Cref{eq:Hodge_dec_2} can be interpreted as the \emph{irrotational}, \emph{solenoidal}, and \emph{harmonic} components of the edge signal $\mathbf{x}_\mathcal{E}$, respectively~\cite{barbarossa2020topological}.

\spara{Gaussian Markov Random Fields.} 
Let $\mathbf{x}_\mathcal{N}
\sim N(0,\mathbf{\Sigma})$ be a Gaussian random vector, with covariance $\mathbf{\Sigma}\succ0$. We denote $\bm{\Omega} = \mathbf{\Sigma}^{-1}$ the \textit{precision matrix} of $\mathbf{x}_\mathcal{N}$. A \textit{Gaussian Markov Random Field} (GMRF) represents $\mathbf{x}_\mathcal{N}$ by an undirected graph \(G = (\mathcal{N}, \mathcal{L})\): its nodes correspond to the index set $\mathcal{N}$, while its links are encoded by the precision matrix, that is,
\begin{equation}
 (n_i,n_j) \in \mathcal{L}
\;\Longleftrightarrow\;
\bm{\Omega}_{ij} \neq 0, \qquad i \neq j.   
\end{equation}
Accordingly, for $i \neq j$, if \(\bm{\Omega}_{ij} \neq 0\) we say that variables \(x_{n_i}\) and \(x_{n_j}\) \textit{interact directly}, in the sense that their dependence is expressed by a link of the corresponding GMRF.

\spara{Global Markov Property.} Let \(G = (\mathcal{N}, \mathcal{L})\) be a GMRF associated to $\mathbf{x}_\mathcal{N}$. For disjoint $\mathcal{N}_\mathcal{A},\mathcal{N}_\mathcal{B},\mathcal{N}_\mathcal{S}\subseteq\mathcal{N}$, we indicate with $\mathcal{N}_\mathcal{A} \perp_G \mathcal{N}_\mathcal{B} \ | \ \mathcal{N}_\mathcal{S}$ the fact that every path from $\mathcal{N}_\mathcal{A}$ to $\mathcal{N}_\mathcal{B}$ in $G$ meets $\mathcal{N}_\mathcal{S}$. 
Then, $\mathbf{x}_\mathcal{N}$ satisfies the \textit{global Markov property} in $G$:
\[
 \mathcal{N}_\mathcal{A} \perp_G \mathcal{N}_\mathcal{B} \ | \ \mathcal{N}_\mathcal{S}
\;\Longrightarrow\;
\mathbf{x}_{\mathcal{N}_\mathcal{A}}\;\perp\!\!\!\perp\;\mathbf{x}_{\mathcal{N}_\mathcal{B}}\;\big|\;\mathbf{x}_{\mathcal{N}_\mathcal{S}}.
\]
Therefore, through this property, GMRFs allow to elaborate probabilistic statements of conditional independence between jointly Gaussian variables.
\

\section{Colored Markov Random Fields}
In this section, we introduce CMRFs for representing the independence structure of Gaussian edge signals over simplicial complexes. 
We build on the SGM introduced in~\cite{marinucci2025simplicialgaussianmodelsrepresentation}.

\spara{SGM on edge signals.} 
Let $\mathcal{X}$ be a 2-dimensional simplicial complex, with trivial first homology, 
i.e., $\ker(\mathbf{L}_1) = \{0\}$. We focus on a random edge signal  $\mathbf{x}_\mathcal{E} \in \mathbb{R}^{|\mathcal{E}|}$, that is, a set of random variables indexed by the edges of $\mathcal{X}$. We assume that $\mathbf{x}_\mathcal{E}$ follows a Gaussian distribution $\mathbf{x}_\mathcal{E} \sim N\bigl(0,\bm{\Omega}_\mathcal{E}^{-1}\bigr)$, with precision matrix $\bm{\Omega}_\mathcal{E}\succ0$, and it satisfies
\begin{equation}\label{eq:edge_generation}
    \mathbf{x}_\mathcal{E} 
    \;=\; k^{-1}\mathbf{B}_1^\top \mathbf{x}_\mathcal{V} 
    \;+\; k^{-1}\mathbf{B}_2 \mathbf{x}_\mathcal{T} 
    \;+\; \mathbf{z}_\mathcal{E}, 
\end{equation}
where $k > 0$ is a scalar parameter, $\mathbf{x}_\mathcal{V}$, $\mathbf{x}_\mathcal{T}$ are \textit{latent} random vertex and triangle signals, respectively, and $\mathbf{z}_\mathcal{E} \sim N\bigl(0, k^{-1}\mathbf{I}_{|\mathcal{E}|}\bigr)$ is white Gaussian noise, independent of $\mathbf{x}_\mathcal{V}$ and $\mathbf{x}_\mathcal{T}$. Note that \Cref{eq:edge_generation} expresses a multi-order decomposition of the random edge signal $\mathbf{x}_\mathcal{E}$. In fact, $\mathbf{x}_\mathcal{E}$ incorporates a vertex-generated and a triangle-generated component, plus independent stochastic fluctuations over the edges. Since $\ker(\mathbf{L}_1) = \{0\}$, this decomposition does not include the harmonic component of the Hodge decomposition in \Cref{eq:Hodge_dec_2}, whose incorporation is planned as future work. 

In the SGM, the joint distribution of $(\mathbf{x}_\mathcal{V}, \mathbf{x}_\mathcal{E}, \mathbf{x}_\mathcal{T})$ is Gaussian and its precision matrix $\bm{\Omega}$ is shaped by the simplicial complex's topology (see Eq. (10) in~\cite{marinucci2025simplicialgaussianmodelsrepresentation} for its precise expression). By marginalizing the joint Gaussian model to the edge level via a Schur complement, 
one obtains an explicit expression for $\bm{\Omega}_\mathcal{E}$, 
as stated in Prop.~1 in~\cite{marinucci2025simplicialgaussianmodelsrepresentation}:
\begin{equation}
\label{eq:precision_E_paper}
 \bm{\Omega}_\mathcal{E} 
 \;=\; k\,\mathbf{I}_{|\mathcal{E}|} 
 \;-\; \mathbf{B}_1^\top \operatorname{diag}(\mathbf{d}_\mathcal{V}) \mathbf{B}_1 
 \;-\; \mathbf{B}_2 \operatorname{diag}(\mathbf{d}_\mathcal{T}) \mathbf{B}_2^\top\,,
\end{equation}
where $\mathbf{d}_\mathcal{V} \in \mathbb{R}^{|\mathcal{V}|}$ and 
$\mathbf{d}_\mathcal{T} \in \mathbb{R}^{|\mathcal{T}|}$ are non-negative coefficient vectors and $k$ is sufficiently large to guarantee $\bm{\Omega}_\mathcal{E} \succ 0$.
The entries of $\mathbf{d}_\mathcal{V}$ and $\mathbf{d}_\mathcal{T}$ reflect the variability of 
the latent vectors $\mathbf{x}_\mathcal{V}$ and $\mathbf{x}_\mathcal{T}$, respectively, with zero entries corresponding to the absence of a latent component on the associated vertex or triangle. The two structured terms in~\Cref{eq:precision_E_paper} encode how this variability propagates to the edge variables. 
The terms 
$-\mathbf{B}_1^{\top}\operatorname{diag}(\mathbf{d}_\mathcal{V})\mathbf{B}_1$
and 
$-\mathbf{B}_2\operatorname{diag}(\mathbf{d}_\mathcal{T})\mathbf{B}_2^{\top}$
generate dependencies between edges incident to the same vertex and between edges that
are co-faces of a common triangle, respectively, whenever latent variables are placed
on those simplices. We refer to these as \emph{lower} and \emph{upper} interactions
between edge variables.

\begin{figure}[!t]
    \centering
    \includegraphics[width = \columnwidth]{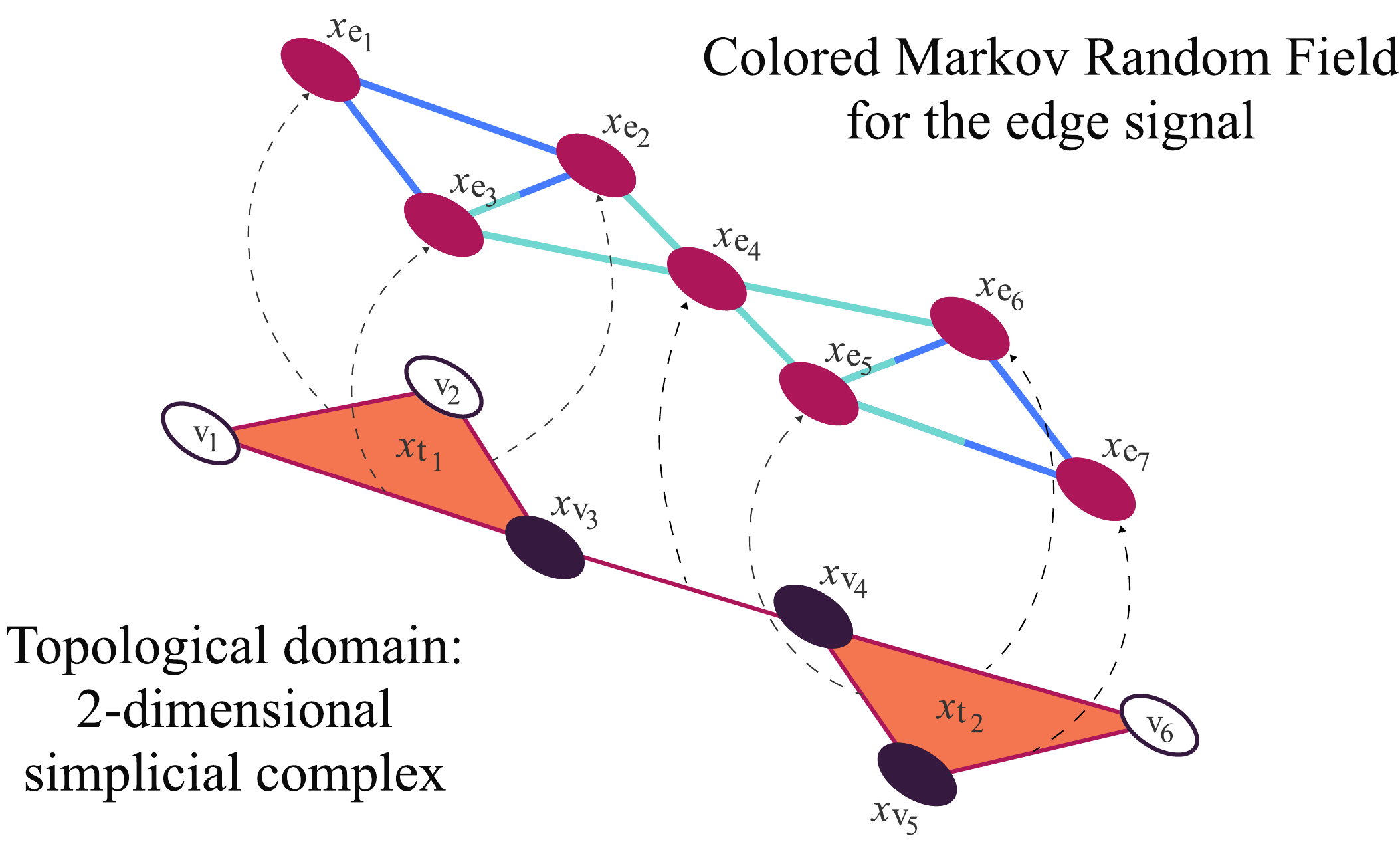}
\caption{$\mathbf{x}_\mathcal{E}:=[x_{e_1},...,x_{e_7}]$ is an edge signal over a $2-$simplicial complex. The CMRF associated to $\mathbf{x}_\mathcal{E}$ distinguishes lower (sky-blue) and upper (dark-blue) interactions.} 
    \label{fig:top-and-cmrf}
\end{figure}

Building on this framework, we define a link-colored graphical representation of $\mathbf{x}_\mathcal{E}$  as follows.

\spara{Color Markov Random Fields.} Let $\mathbf{x}_\mathcal{E} \sim N\bigl(0,\bm{\Omega}_\mathcal{E}^{-1}\bigr)$ be a Gaussian edge signal with precision matrix $\bm{\Omega}_\mathcal{E}\succ0$ as in \Cref{eq:precision_E_paper}. A \textit{Colored Markov Random Field} (CMRF) associated to $\mathbf{x}_\mathcal{E}$ is a link-colored graph $G=(\mathcal{E}, \mathcal{L}_d \cup \mathcal{L}_u)$, with 
   \begin{align}
      \{e_i,e_j\}\in \mathcal L_d 
    &\iff (\mathbf B_1^\top \operatorname{diag}(\mathbf{d}_\mathcal{V}) \mathbf B_1)_{ij}\neq 0 \ , \label{eq:lower_inter} \\
    \{e_i,e_j\}\in\mathcal L_u 
           & \iff (\mathbf B_2 \operatorname{diag}(\mathbf{d}_\mathcal{T}) \mathbf B_2^\top)_{ij}\neq 0 \ . \label{eq:upper_inter}
   \end{align}
In CMRFs, color distinguishes if a link encodes a lower interaction, belonging to $\mathcal L_d$, or an upper interaction, belonging to $\mathcal L_u$. If a link is in both, it carries both colors.

To clarify the construction, 
\Cref{fig:top-and-cmrf} depicts an example. 
Under the current assumptions, we consider a Gaussian edge
signal $\mathbf{x}_\mathcal{E} = [x_{e_1},x_{e_2},\dots,x_{e_7}]$ supported on the 2-simplicial complex in figure. In this case, latent variables are placed on vertices $\{
v_3,v_4,v_5\}$ of the
complex, as well as on both of its triangles, inducing lower and upper interactions between edge variables, respectively. According to the expression of the precision matrix in~\Cref{eq:precision_E_paper}, this configuration implies that the coefficient vector $\mathbf{d}_\mathcal{V}=[d_{v_1},d_{v_2},...,d_{v_6}]$ has $d_{v_1},d_{v_2},d_{v_6}=0$. The resulting CMRF associated to $\mathbf{x}_\mathcal{E}$ is depicted on the top of Fig.~\ref{fig:top-and-cmrf}.

We now state the main result of this work, showing that in a CMRF both the graph
connectivity and link colors give rise to probabilistic indipendence statements for
the variables of interest.  
We start by providing a preliminary definition.

\spara{Color separation.} Let $G$ be a CMRF associated to a random edge signal $\mathbf{x}_\mathcal{E}$. For disjoint $\mathcal{E}_\mathcal{A},\mathcal{E}_\mathcal{B}\subseteq \mathcal{E}$, we say that $\mathcal{E}_\mathcal{A}$ and $\mathcal{E}_\mathcal{B}$ are \textit{separated by color} (and write $\mathcal{E}_\mathcal{A} \perp_{\mathrm{col}} \mathcal{E}_\mathcal{B}$) if there is no \textit{monochromatic} path from $\mathcal{E}_\mathcal{A}$ to $\mathcal{E}_\mathcal{B}$ in $G$.\\
Equivalently, two groups of graph nodes are separated by color if every path connecting them
concatenates both lower and upper interactions. 
\begin{theorem}
 Let $\mathcal{X}$ be a 2-dimensional simplicial complex, with edge set $\mathcal{E}$, $\mathbf{x}_\mathcal{E} \sim N\bigl(0,\bm{\Omega}_\mathcal{E}^{-1}\bigr)$ be a Gaussian edge signal with precision matrix $\bm{\Omega}_\mathcal{E}\succ0$ as in (\ref{eq:precision_E_paper}), and $G$ be the CMRF associated to $\mathbf{x}_\mathcal{E}$. Then, the following hold:
    \begin{enumerate}
        \item $\mathbf{x}_\mathcal{E}$ satisfies the global Markov property in $G$, i.e. given disjoint $\mathcal{E}_\mathcal{A}, \mathcal{E}_\mathcal{B}, \mathcal{E}_\mathcal{S} \subseteq \mathcal{E}$: \[\mathcal{E}_\mathcal{A} \perp_{G} \mathcal{E}_\mathcal{B} \ | \ \mathcal{E}_\mathcal{S}\;\Longrightarrow\; \mathbf{x}_{\mathcal{E}_\mathcal{A}}\;\perp\!\!\!\perp\;\mathbf{x}_{\mathcal{E}_\mathcal{B}}\;\big|\;\mathbf{x}_{\mathcal{E}_\mathcal{S}}\ . \]
        \item Given disjoint $\mathcal{E}_\mathcal{A},\mathcal{E}_\mathcal{B}\subseteq \mathcal{E}$:
        \[\mathcal{E}_\mathcal{A} \perp_{\mathrm{col}} \mathcal{E}_\mathcal{B}\;\Longrightarrow\; \mathbf{x}_{\mathcal{E}_\mathcal{A}}\;\perp\!\!\!\perp\;\mathbf{x}_{\mathcal{E}_\mathcal{B}} \ . \]
        
    \end{enumerate}
    
\end{theorem}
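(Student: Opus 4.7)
Writing $\mathbf{M}_d := \mathbf{B}_1^\top \operatorname{diag}(\mathbf{d}_\mathcal{V}) \mathbf{B}_1$ and $\mathbf{M}_u := \mathbf{B}_2 \operatorname{diag}(\mathbf{d}_\mathcal{T}) \mathbf{B}_2^\top$, so that $\bm{\Omega}_\mathcal{E} = k\mathbf{I} - \mathbf{M}_d - \mathbf{M}_u$, the plan is to exploit throughout the identity
\begin{equation*}
\mathbf{M}_d \mathbf{M}_u = \mathbf{M}_u \mathbf{M}_d = \mathbf{0},
\end{equation*}
which is an immediate consequence of the chain complex property $\mathbf{B}_1 \mathbf{B}_2 = \mathbf{0}$. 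Part~1 will then reduce to a direct graph-containment observation, while Part~2 will come from a Neumann-series expansion of $\Sigma_\mathcal{E} = \bm{\Omega}_\mathcal{E}^{-1}$ that the above identity collapses onto two decoupled monochromatic sums.

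\textbf{Part 1.} For $i \neq j$ one has $(\bm{\Omega}_\mathcal{E})_{ij} = -(\mathbf{M}_d)_{ij} - (\mathbf{M}_u)_{ij}$, so whenever $(\bm{\Omega}_\mathcal{E})_{ij} \neq 0$ at least one summand is nonzero and $\{e_i, e_j\} \in \mathcal{L}_d \cup \mathcal{L}_u$. I would therefore note that the CMRF is a supergraph of the canonical Gaussian MRF attached to $\bm{\Omega}_\mathcal{E}$, so that any separation $\mathcal{E}_\mathcal{A} \perp_G \mathcal{E}_\mathcal{B} \mid \mathcal{E}_\mathcal{S}$ in the CMRF also holds in the (possibly sparser) canonical GMRF. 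The classical Gaussian global Markov property then delivers $\mathbf{x}_{\mathcal{E}_\mathcal{A}} \perp\!\!\!\perp \mathbf{x}_{\mathcal{E}_\mathcal{B}} \mid \mathbf{x}_{\mathcal{E}_\mathcal{S}}$.

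\textbf{Part 2.} Using $\bm{\Omega}_\mathcal{E} \succ 0$, I would expand
\begin{equation*}
\Sigma_\mathcal{E} = \frac{1}{k}\sum_{n \geq 0} \frac{(\mathbf{M}_d + \mathbf{M}_u)^n}{k^n},
\end{equation*}
which converges. Expanding the binomial and killing every term that mixes an $\mathbf{M}_d$ with an $\mathbf{M}_u$, only the purely lower and purely upper products survive, so $(\mathbf{M}_d + \mathbf{M}_u)^n = \mathbf{M}_d^n + \mathbf{M}_u^n$ for $n \geq 1$. For $i \neq j$, this yields
\begin{equation*}
(\Sigma_\mathcal{E})_{ij} = \sum_{n \geq 1} \frac{(\mathbf{M}_d^n)_{ij} + (\mathbf{M}_u^n)_{ij}}{k^{n+1}}.
\end{equation*}
The entry $(\mathbf{M}_d^n)_{ij}$ is a signed sum over length-$n$ walks in $(\mathcal{E}, \mathcal{L}_d)$ and vanishes as soon as $e_i$ and $e_j$ lie in distinct connected components of that lower subgraph; the analogous statement holds for $\mathbf{M}_u^n$ in $(\mathcal{E}, \mathcal{L}_u)$. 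Color separation between $\mathcal{E}_\mathcal{A}$ and $\mathcal{E}_\mathcal{B}$ rules out both lower and upper paths across the two sets, so $(\Sigma_\mathcal{E})_{ij} = 0$ for every $i \in \mathcal{E}_\mathcal{A}$, $j \in \mathcal{E}_\mathcal{B}$. Joint Gaussianity then upgrades this vanishing cross-covariance to the claimed marginal independence.

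\textbf{Main obstacle.} The real content lies in the series decoupling, which is the analytic face of Hodge orthogonality: $\mathbf{M}_d$ and $\mathbf{M}_u$ act on the mutually orthogonal irrotational and solenoidal subspaces of $\mathbb{R}^{|\mathcal{E}|}$, so their composition annihilates. Once this is secured, the rest is standard walk counting on each monochromatic subgraph, with the minor caveat that one must use only the implication ``no walk $\Rightarrow$ zero entry,'' which is robust to the orientation-sign cancellations that could in principle zero out $(\mathbf{M}_d^n)_{ij}$ even when walks do exist.
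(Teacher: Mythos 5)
Your proof is correct, and Part 1 coincides with the paper's argument: the CMRF is a supergraph of the canonical GMRF of $\bm{\Omega}_\mathcal{E}$, separation transfers to the sparser graph, and the classical Gaussian global Markov property applies. For Part 2 you take a somewhat different route from the paper, resting on the same key fact. The paper works in closed form: it sets $\bm{\Omega}_{d}=k\mathbf{I}-\mathbf{M}_d$ and $\bm{\Omega}_{u}=k\mathbf{I}-\mathbf{M}_u$, uses $\mathbf{B}_1\mathbf{B}_2=\mathbf{0}$ to obtain the factorization $\bm{\Omega}_\mathcal{E}=k^{-1}\bm{\Omega}_{d}\bm{\Omega}_{u}$, verifies the identity $\bm{\Sigma}_\mathcal{E}=\bm{\Omega}_{d}^{-1}+\bm{\Omega}_{u}^{-1}-k^{-1}\mathbf{I}$, and then notes that $\bm{\Omega}_{d}^{-1}$ and $\bm{\Omega}_{u}^{-1}$ are block-diagonal with respect to the connected components of the lower and upper subgraphs, so the cross entries vanish. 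You instead reach the same additive splitting through a Neumann expansion in which $\mathbf{M}_d\mathbf{M}_u=\mathbf{M}_u\mathbf{M}_d=\mathbf{0}$ annihilates every mixed word, and then finish by walk counting with the correct one-sided implication (no monochromatic walk implies zero entry); summing your two monochromatic series reproduces exactly the paper's $\bm{\Omega}_{d}^{-1}+\bm{\Omega}_{u}^{-1}-k^{-1}\mathbf{I}$, so the arguments are two derivations of the same covariance decomposition. What the paper's version buys is that it involves no convergence question; your version needs $\rho\bigl((\mathbf{M}_d+\mathbf{M}_u)/k\bigr)<1$, which does follow from $\bm{\Omega}_\mathcal{E}\succ0$, but only because the non-negativity of $\mathbf{d}_\mathcal{V},\mathbf{d}_\mathcal{T}$ makes $\mathbf{M}_d+\mathbf{M}_u$ positive semidefinite, so its spectral radius equals its largest eigenvalue, which is below $k$; you should say this explicitly rather than citing positive definiteness alone. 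A minor cosmetic point: the support of $\mathbf{M}_d$ also has diagonal entries, so your length-$n$ walks may include self-loops, but since these stay inside a connected component the block structure, and hence your conclusion, is unaffected.
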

\begin{proof}

 1) Let $G' = (\mathcal{E},\mathcal{L}')$ be the GMRF representation of $\mathbf{x}_\mathcal{E}$, whose node set is $\mathcal{E}$ and links are given by
\[
 \{e_i,e_j\} \in \mathcal{L}'
 \;\Longleftrightarrow\;
 (\bm{\Omega}_\mathcal{E})_{ij} \neq 0, \qquad i \neq j.
\]
By \Cref{eq:precision_E_paper}, each non-zero entry of $\bm{\Omega}_\mathcal{E}$ arises from a non-zero entry of either $\mathbf B_1^\top \operatorname{diag}(\mathbf{d}_\mathcal{V}) \mathbf B_1$ or $\mathbf B_2 \operatorname{diag}(\mathbf{d}_\mathcal{T}) \mathbf B_2^\top$ (possibly both). Hence, by construction, the standard GMRF graph $G'$ is a subgraph of the CMRF, i.e., $G' \subseteq G$.  It is straightforward that graph inclusion preserves the global Markov property. Since it holds for $G'$, it therefore holds 
for $G$ as well. \\
2) We define the \emph{lower} and \emph{upper} precision matrices as
\begin{align}
  \bm{\Omega}_{ d}
  &:= k\,\mathbf{I}_{|\mathcal{E}|} 
      - \mathbf{B}_1^\top \operatorname{diag}(\mathbf{d}_\mathcal{V}) \mathbf{B}_1, \label{eq:lower_prec}\\
  \bm{\Omega}_{u}
  &:= k\,\mathbf{I}_{|\mathcal{E}|} 
      - \mathbf{B}_2 \operatorname{diag}(\mathbf{d}_\mathcal{T}) \mathbf{B}_2^\top. \label{eq:upper_prec}
\end{align}
From \Cref{eq:lower_prec,eq:upper_prec}, it follows immediately that
\begin{equation}\label{eq:omega_dec}
    \bm{\Omega}_\mathcal{E}= \bm{\Omega}_{ d} + \bm{\Omega}_{u} - k\mathbf{I}_{|\mathcal{E}|} 
\end{equation}
Moreover, using $\mathbf{B}_1 \mathbf{B}_2 = \mathbf{0}$, it is easy to verify that
\begin{equation}\label{eq:omega_factor}
  \bm{\Omega}_\mathcal{E} = k^{-1} \bm{\Omega}_{ d} \bm{\Omega}_{ u}
              = k^{-1} \bm{\Omega}_{u} \bm{\Omega}_{d}.
\end{equation}
Note that, since $\bm{\Omega}_\mathcal{E}\succ0$, $\bm{\Omega}_{d}$ and $\bm{\Omega}_{ u}$ are symmetric positive definite, hence invertible.
It can be proved that the covariance matrix $\bm{\Sigma}_\mathcal{E} = \bm{\Omega}_\mathcal{E}^{-1}$ can be written as
\begin{equation}\label{eq:cov_dec}
\bm{\Sigma}_\mathcal{E} = \bm{\Omega}_{u}^{-1} + \bm{\Omega}_{d}^{-1} - k^{-1} \mathbf{I}_{|\mathcal{E}|},
\end{equation}
by verifying the identity $\bm{\Sigma}_\mathcal{E} \bm{\Omega}_\mathcal{E} = \mathbf{I}_{|\mathcal{E}|}$ using \Cref{eq:omega_dec,eq:omega_factor}.

Let $\mathcal{E}_\mathcal{A},\mathcal{E}_\mathcal{B} \subseteq \mathcal{E}$ be a pair of sets
in $G$ such that $\mathcal{E}_\mathcal{A} \perp_{\mathrm{col}} \mathcal{E}_\mathcal{B}$. By Gaussianity of 
$\mathbf{x}_\mathcal{E}$, to prove 2) it is sufficient to show that
\[
\mathrm{Cov}(x_{e_i},x_{e_j}) = 0 
\quad \text{for every } e_i \in \mathcal{E}_\mathcal{A},\ e_j \in \mathcal{E}_\mathcal{B}.
\]
\begin{figure*}
\centering
\scalebox{0.9}{
\parbox{\textwidth}{
\hrule
\vspace{0.3em}

\begin{equation}\label{eq:edge_wise_h}
    \Phi_{h,e}[t] = \mathbb{E}\Big\{\frac{k}{2}\,(y_e[t] - \mathbf{u}_e^\top[t]\bm\theta)^2\Big\}
\end{equation}
\begin{equation}\label{eq:edge_wise_l}
 \Phi_{d,e}[t] = \mathbb{E}\Big\{\frac{1}{2} \Big[ 
        (\mathbf{d}_{\mathcal{V},u_1} + \mathbf{d}_{\mathcal{V},u_2}) (y_e[t] - \mathbf{u}_e^\top[t]\bm\theta)^2
        + \sum_{u \in\{u_1,u_2\}} \sum_{e' \neq e: u \trianglelefteq e'} \mathbf{d}_{\mathcal{V},u_1} \mathbf{B}_1[e,u]\mathbf{B}_1[e',u] (y_e[t] - \mathbf{u}_e^\top[t]\bm\theta)(y_{e'}[t] - \mathbf{u}_{e'}^\top[t]\bm\theta) \Big\}
\end{equation}
\begin{equation}\label{eq:edge_wise_u}
\Phi_{u,e}[t]= \mathbb{E}\Big\{\frac{1}{2} \Big[
        \sum_{t\in \mathcal{T}: e\in t} \Big(
            \mathbf{d}_{\mathcal{T},t} (y_e[t] - \mathbf{u}_e^\top[t]\bm\theta)^2 
            + \sum_{e'\neq e: e'\in t} \mathbf{d}_{\mathcal{T},t} \mathbf{B}_2[t,e]\mathbf{B}_2[t,e'] (y_e[t] - \mathbf{u}_e^\top[t]\bm\theta)(y_{e'}[t] - \mathbf{u}_{e'}^\top[t]\bm\theta)\Big)\Big]\Big\}
\end{equation}

\vspace{0.1em}
\hrule
}}
\end{figure*}
Fix $e_i \in \mathcal{E}_\mathcal{A}$ and $e_j \in \mathcal{E}_\mathcal{B}$. Since 
$\mathcal{E}_\mathcal{A} \perp_{\mathrm{col}} \mathcal{E}_\mathcal{B}$, $e_i$ and $e_j$ belong to two distinct
connected components of the lower graph $G_{d}=(\mathcal{E},\mathcal{L}_d),$ with $\mathcal{L}_d$ defined as in \eqref{eq:lower_inter}. By construction, the sparsity pattern of $\bm{\Omega}_{d}$ encodes the connectivity of $G_{d}$. Thus, after a reordering of the edge indices, the matrix $\bm{\Omega}_{d}$
takes a two block-diagonal form, with the blocks corresponding to the components of $G_{d}$ containing $e_i$ and $e_j$, respectively. Hence, its inverse $\bm{\Omega}_{d}^{-1}$ has the same
block structure, giving, in particular, $(\bm{\Omega}_{d}^{-1})_{ij} = 0.$
The same argument applied to $G_{u}=(\mathcal{E},\mathcal{L}_u),$ with $\mathcal{L}_u$ defined as in \Cref{eq:upper_inter}, yields $(\bm{\Omega}_{u}^{-1})_{ij} = 0.$ 

Using the decomposition in \Cref{eq:cov_dec}, we obtain, for $i \neq j$,
\begin{equation}
\mathrm{Cov}(x_{e_i},x_{e_j})
= \mathbf{\Sigma}_{ij}
= (\bm{\Omega}_{d}^{-1})_{ij} + (\bm{\Omega}_{u}^{-1})_{ij}
= 0. \qedhere   
\end{equation}
 \end{proof}
Note that the CMRF representation of edge signals retains the expressiveness of a standard MRF in encoding conditional dependencies among variables via the Markov property. In fact, the connectivity of the graph describes the paths along which information can propagate. However, by color separation, the CMRF is able to reveal marginal independences that arise from the different nature of interactions. This heterogeneity depends on the higher-order domain on which variables originate.

Revisiting the CMRF in~\Cref{fig:top-and-cmrf}, according to the coloring of links, $x_{e_1}$ is independent of $(x_{e_4},x_{e_5},x_{e_6},x_{e_7})$, since there is no monochromatic path from $e_1$ to  $\{e_4,e_5,e_6,e_7\}$ in the CMRF. 
This independence cannot be deduced from the mere graph connectivity of the corresponding uncolored GMRF.

\section{Diffusion Adaptation over Colored Markov Random Fields}

We illustrate the utility of our framework through a distributed estimation case study by extending earlier protocols operating among agents collecting node measurements \cite{di2014diffusion}. 
As motivated in the introduction, we instead consider sensors placed on the edges of physical networks to measure flows between 
adjacent nodes, and we aim at quantifying the gain coming from higher-order topological information.

For each edge $e$ and time instant $t$, we consider
the following linear measurement model of the parameter $\bm\theta_0 \in \reall^M$:
\begin{equation}
    y_e[t] = \mathbf{u}_e^\top[t]\bm\theta_0 + n_e[t] \ ,
\end{equation}
where $\mathbf{u}_e^\top[t]$ is a known time-varying $M$ dimensional regressor, independent over time and space.
Crucially, the noise $\mathbf{n}_\mathcal{E}$ spatially correlates the measurements, being a Gaussian edge signal $\mathbf{n}_\mathcal{E} \sim \mathcal{N}(0, \bm{\Omega}_\mathcal{E}^{-1})$, with $\bm{\Omega}_\mathcal{E}$ as in \Cref{eq:precision_E_paper}, parametrized by a scalar $k>0$ and the non-negative coefficient vectors $\mathbf{d}_\mathcal{V},\mathbf{d}_\mathcal{T}$.
Thus, the CMRF of interest represents the \textit{logic network} layer, capturing statistical dependencies among measurements. 
By aggregating at each time-step the edge measurements in the vector $\mathbf{y}[t]=[y_{e_1}[t],...,y_{e_{|\mathcal{E}|}}[t]]^\top$ and the regressors in the matrix $\mathbf{U}[t]=[\mathbf{u}_{e_1},...,\mathbf{u}_{e_{|\mathcal{E}|}}]^\top$, we cast a maximum likelihood estimation problem. 
Assuming the model is stationary, and denoting by $\| \cdot\|_{\bm{\Omega}_\mathcal{E}}$ the norm weighted by $\bm{\Omega}_\mathcal{E}$, we consider
\begin{equation}\label{eq:global_problem}
    \bm\theta_0 = \underset{\bm\theta}{\mathrm{argmin}}\ \mathbb{E}\{\|\mathbf{y}[t]-\mathbf{U}[t]\bm\theta\|_{\bm{\Omega}_\mathcal{E}}^2\} \ .
\end{equation}
We solve \Cref{eq:global_problem} using instantaneous information and local signaling among communicating agents \cite{chen2012diffusion}.
In detail, we leverage adapt-then-combine (ATC) schemes based on local stochastic gradient descent (\textit{adaptation}) and averaging among neighbors on the line graph (\textit{combination}).
The latter corresponds to the graph encoding the lower adjacencies between edges in the simplicial complex (cf. \Cref{sec:background}). 
Notably, since in a simplicial complex the upper adjacency implies the lower adjacency, this \textit{communication network} layer is a sufficient infrastructure to support a complete exchange of information.

\begin{table}[t]
\centering
\caption{Summary of experimental setup.}
\begin{tabular}{ll}
\toprule
\
\textbf{Parameter} & \textbf{Specification / Distribution} \\
\midrule\midrule
2-SC & $|\mathcal{V}| = 10, |\mathcal{E}| = 21, |\mathcal{T}| = 12$ \\ \midrule
$\mathbf{d}_\mathcal{V}, \mathbf{d}_\mathcal{T} $& $\mathrm{Unif}(0.2, 5)$ \\ \midrule
$k$& $\lambda_{\max}\{\mathbf{B}_1^\top \mathrm{diag}(\mathbf{d}_\mathcal{V}) \mathbf{B}_1 + \mathbf{B}_2 \mathrm{diag}(\mathbf{d}_\mathcal{T}) \mathbf{B}_2^\top\} + 0.1$ \\ \midrule
$\theta_0$ & $\mathcal{N}(0, \mathbf{I}_{10})$ \\ \midrule
$\mathbf{u}_e[k]$& $\mathcal{N}(0, 0.2 \cdot \mathbf{I}_{10})$ \\ \midrule
$\mu$& $5\cdot10^{-3}$ for ATC-CMRF  \\
(learning rate)& Others are scaled to match convergence rate \\
\bottomrule
\end{tabular}
\label{tab:simplicial_setup}
\end{table}

By defining $\mathbf{D}_\mathcal{V}:=\operatorname{diag}(\mathbf{d}_\mathcal{V})$ and $\mathbf{D}_\mathcal{T}:=\operatorname{diag}(\mathbf{d}_\mathcal{T})$, Prob. \eqref{eq:global_problem} decouples into three terms accordingly to \Cref{eq:precision_E_paper}:
\begin{equation} \begin{aligned} & \mathbb{E}\big\{\|\mathbf{y}[t]-\mathbf{U}[t]\bm\theta\|_{\bm{\Omega}_\mathcal{E}}^2\big\} = \mathbb{E}\bigg\{\underbrace{\frac{k}{2}\|\mathbf{y}[t]-\mathbf{U}[t]\bm\theta\|_{2}^2}_{\ell_{h}[t]} \bigg\} + \\ 
-& \mathbb{E}\bigg\{\!\underbrace{\frac{1}{2}\|\mathbf{B}_1(\mathbf{y}[t]\!-\!\mathbf{U}[t]\bm\theta)\|_{\mathbf{D}_{\mathcal{V}}}^2}_{\ell_d[t]}\!\bigg\} \!-\!\mathbb{E}\bigg\{\!\underbrace{\frac{1}{2}\|\mathbf{B}_2^\top(\mathbf{y}[t]\!-\!\mathbf{U}[t]\bm\theta)\|_{\mathbf{D}_\mathcal{T}}^2}_{\ell_u[t]}\!\bigg\}. \notag \end{aligned} \end{equation}

These three terms in turn can be decomposed edge-wise as 
\begin{equation}
    \ell_h[t]-\ell_d[t]-\ell_u[t]=\sum_{e \in \mathcal{E}} \Phi_e[t] = \sum_{e \in \mathcal{E}}\{\Phi_{h,e}[t]-\Phi_{d,e}[t]-\Phi_{u,e}[t]\},
\end{equation}
as specified in \Cref{eq:edge_wise_h,eq:edge_wise_l,eq:edge_wise_u}, enabling collective estimation at every topological level by the aforementioned ATC scheme.
We do not delve into details about convergence properties of the algorithm, since it follows similar arguments as in \cite{di2014diffusion}.

We validate the proposed procedure via numerical simulation over a 2-simplicial complex (2-SC) built on top of a random ER graph: \Cref{tab:simplicial_setup} summarizes the setup.
\begin{figure}[!t]
    \centering
    \includegraphics[width = 1\linewidth]{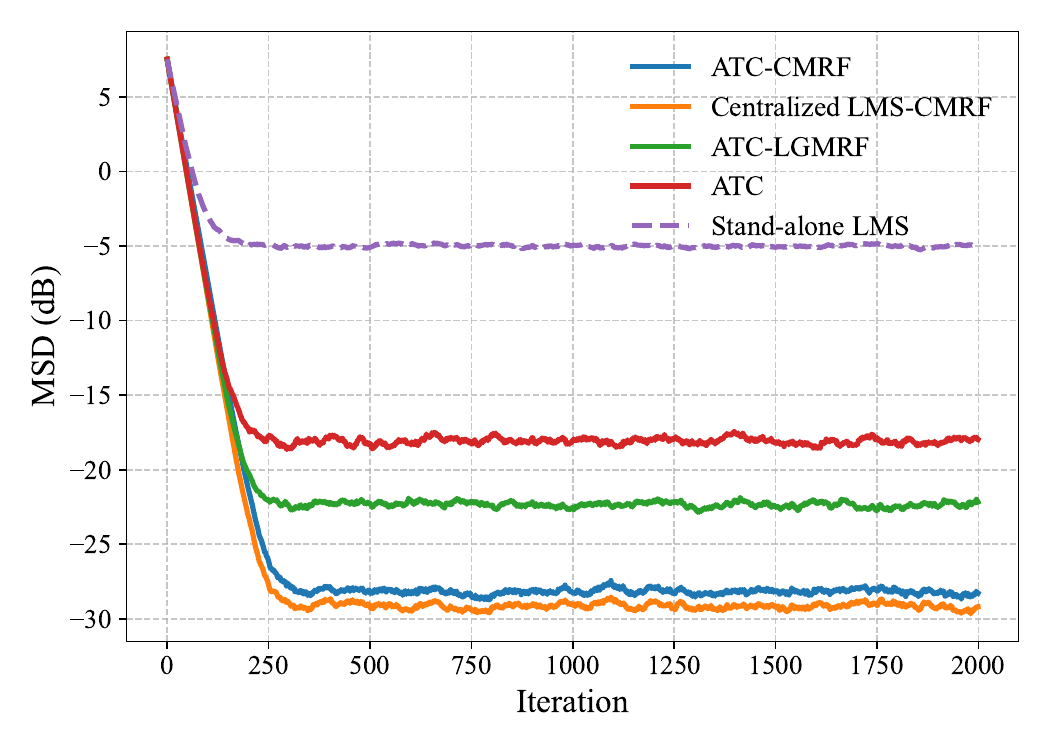}
\caption{Average MSD for the proposed simulations} 
    \label{fig:cmrf}
\end{figure}
In this synthetic scenario, we run 100 independent simulations to retrieve a Monte Carlo estimate of the mean squared deviation $\mathrm{MSD} = \lim_{t \rightarrow\infty} \frac{1}{|\mathcal{E}|}\sum_{e \in \mathcal{E}}\|\bm\theta_e[t]-\bm\theta_0\|_2^2$ over 2000 iterations. 
We compare ATC--CMRF with four baselines: (i) stand-alone LMS, where each agents performs purely local adaptation; (ii) centralized LMS--CMRF, a fully informed centralized method; and two ATC variants—(iii) ATC--LGMRF, which exploits only the lower-graph MRF ($\Phi_e[t] = \Phi_{h,e}[t] - \Phi_{d,e}[t]$); and (iv) ATC, which ignores spatial correlation ($\Phi_e[t] = \Phi_{h,e}[t]$).

ATC-CMRF nearly matches centralized LMS-CMRF, while the other distributed protocols yield only partial gains over stand-alone LMS, consistently with their reduced topological information.
Conversely, ATC-CMRF can subsume these variants: setting $\mathbf{d}_{\mathcal T}=0$ recovers ATC-LGMRF, and setting $\mathbf{d}_{\mathcal V}=\mathbf{d}_{\mathcal T}=0$ recovers ATC, highlighting its superior performance and greater expressivity.

\section{Conclusions}
We introduced Colored Markov Random Fields, graphical models that encode
statistical independencies of Gaussian edge signals over simplicial complexes. CMRFs
augment GMRFs with link coloring, preserving the Markov property while yielding additional marginal independence statements for edge variables. We evaluated
the theoretical framework on a distributed estimation problem with sensors on network links, and
numerical experiments on synthetic data showed that the resulting topology-aware
representation outperforms alternatives with absent or partial topological information.

Future work includes extending CMRFs beyond the Gaussian setting (e.g., to discrete or
heavy-tailed edge signals) and developing fully data-driven schemes that, unlike the
current case study with prescribed noise structure, jointly learn the topology-aware
noise model (the CMRF parameters) and perform distributed estimation over network links.

\bibliographystyle{unsrt}
\bibliography{bibliography}

\end{document}